\documentclass[conference]{IEEEtran}
\IEEEoverridecommandlockouts

\usepackage{cite}
\usepackage{amsmath,amssymb,amsfonts, amsthm}
\usepackage{algorithm}
\usepackage[noend]{algpseudocode}
\usepackage{graphicx}
\usepackage{textcomp}
\usepackage{xcolor}
\usepackage{bm}
\def\BibTeX{{\rm B\kern-.05em{\sc i\kern-.025em b}\kern-.08em
    T\kern-.1667em\lower.7ex\hbox{E}\kern-.125emX}}

\newcommand{\trans}{T}
\newcommand{\matr}[1]{\mathbf{#1}}
\newcommand{\vect}[1]{\mathbf{#1}}
\newcommand{\norm}[1]{\left\lVert#1\right\rVert}
\newcommand{\Ri}{{\matr{R}_i}}
\newcommand{\rRik}[1][k]{{r^\mathcal{R}_{ i {#1} }}}
\newcommand{\SRik}{{\bm{\Lambda}^{\mathcal{R}_i}_{k}}}
\newcommand{\pars}{{\boldsymbol{\theta}}}
\newcommand{\DR}{{D_{\pars_k}^\mathcal{R}(\mathbf{x}_i)}}

\newtheorem{theorem}{Theorem}
\newtheorem{corollary}[theorem]{Corollary}

\begin{document}

\title{One-Pass Sparsified Gaussian Mixtures
\thanks{This work was supported in part by NSF GRFP award number DGE 1144083.}
}

\author{\IEEEauthorblockN{Eric Kightley}
\IEEEauthorblockA{\textit{Respond Software} \\
Mountain View, California, USA \\
kightley.1@gmail.com}
\and
\IEEEauthorblockN{Stephen Becker}
\IEEEauthorblockA{\textit{Department of Applied Mathematics} \\
\textit{University of Colorado Boulder}\\
Boulder, Colorado, USA \\
stephen.becker@colorado.edu}
}

\maketitle

\begin{abstract}
We present a one-pass sparsified Gaussian mixture model (SGMM). Given \textit{N} data points in
\textit{P} dimensions \textit{X}, the model fits \textit{K}
Gaussian distributions to \textit{X}
and (softly) classifies each point to these clusters. After paying an up-front cost of
\textit{O}(\textit{NP}\,log\,\textit{P}) to precondition the data, 
we subsample \textit{Q} entries of each data point and 
discard the full \textit{P}-dimensional data. SGMM operates in \textit{O}(\textit{KNQ})
time per iteration
for diagonal or spherical covariances, 
independent of \textit{P}, 
while estimating the model parameters in the full \textit{P}-dimensional space,
making it one-pass and hence suitable for streaming data. 
We derive the maximum likelihood estimators for the parameters in the sparsified regime,
demonstrate clustering on synthetic and real data, and show that SGMM is faster than GMM while
preserving accuracy.
\end{abstract}

\begin{IEEEkeywords}
sketching, mixture models, clustering, dimensionality reduction
\end{IEEEkeywords}

\section{Introduction}

When performing clustering analysis on high-dimensional ($P$ features), 
high-volume ($N$ samples) data,
it is common to employ simple clustering schemes like $k$-means and
$k$-nearest-neighbors, particularly during data exploration and feature engineering, 
because these techniques are fast and return informative results \cite{Jain2010}.
Often each data point $\mathbf{x}_i \in \mathbb{R}^P$ will be seen only once and must then 
be discarded, necessitating \emph{one-pass} algorithms \cite{Muthukrishnan2005}. 
Further, the latent dimension $P$ may be prohibitively large or 
the rate of data acquisition may be too high to permit analysis on the full data.

We present a clustering algorithm suitable for this regime: the sparsified
Gaussian mixture model (SGMM), building on our previous work in which we developed
the sparsification scheme we use here and applied it to
$k$-means clustering \cite{Pourkamali2017}. TheGaussian mixture model, in particular when using 
diagonal or spherical covariances, is
a natural extension of $k$-means: it increases generalizability 
by taking into account cluster size and covariance and by performing soft clustering,
while still being relatively inexpensive to compute
\cite{Murphy2012}. 

SGMM works on compressed data,
such that the computation and storage costs scale with $Q \ll P$ instead of $P$, and yet the algorithm is
one-pass, meaning that the model parameters are estimated in the full $P$-dimensional
space. These requirements are seemingly
orthogonal to each other; we are able to provide both by a careful choice of how
we compress the data, which we do using a \emph{sketch}
$\matr{R}_i^\trans \mathbf{x}_i$ of size $Q \ll P$. Our sketching scheme is 
motivated by the Johnson-Lindenstrauss lemma \cite{JohnsonLindenstrauss1984}, 
which states that certain random projections into lower dimensions preserve pairwise distances
to within a small error $\varepsilon$ 
with high probability. In particular, these embeddings can be computed 
efficiently in $\mathcal{O}(N P\log P)$ time 
\cite{Ailon2009}, 
and the data are recoverable from the embeddings when they are sparse in some 
basis \cite{Candes2008a}. 

The idea is to project the data into a lower dimension and perform
analyses there, where it is cheap to do so. A variety of approaches have been proposed to this end
\cite{Achlioptas2003, Halko2011, Woodruff2014, 
Boutsidis2015c, Cohen2015}, including several applications of sketching to clustering 
algorithms \cite{Clarkson2009, Pourkamali2017, Nelson2014, Boutsidis2015c, Boutsidis2010, Cohen2015}
and Gaussian mixtures specifically \cite{Reboredo2013}. 
In general, such compressive approaches are two-pass, meaning that access to the full data
is required to estimate statistics in the original space, such as the sample mean. 
The contribution of our method is that it is compressive \emph{and} one-pass, 
meaning that we estimate statistics in the full $P$-dimensional
space using only $Q$-dimensional sketches of the data. This is possible because we use 
a different sampling matrix $\matr{R}_i$ for each data point $\mathbf{x}_i$,
so that the $Q$ features of some dense statistic $\pars$ informed by $\mathbf{x}_i$
are in general not the same as the $Q$ features informed by another data point $\mathbf{x}_j$.
Additionally, the the quantities we derive (such as mixture responsibilities and 
the Mahalanobis distance) may be useful building blocks for future algorithms using
our sketching scheme. 

The paper is organized as follows. In Section \ref{sec:theory} we discuss the theoretical foundations and
prove our main results. Then, in Section \ref{sec:algorithm} we present the sparsified Gaussian mixture model algorithm
and discuss its implementation and complexity. In Section \ref{sec:simulation} we show simulation results,
and we summarize and conclude in Section \ref{sec:conclusion}. 

\section{Theory}\label{sec:theory}

\subsection{Data Sketching}\label{subsec:sketch}

A \emph{sketch} of a matrix 
$\matr{X} = [\vect{x}_1, \vect{x}_2, \ldots, \vect{x}_N]^\trans \in \mathbb{R}^{N\times P}$ 
is a random low-dimensional projection of $\matr{X}$ \cite{Tropp2017}. Sketching is typically used to obtain a 
low-rank approximation ${\matr{X}}_Q$ with rank $Q \ll P$ such
that $\norm{{\matr{X}}_Q - \matr{X}}_F$ is minimal. There are alternative ways to find such 
an $\matr{X}_Q$, e.g. principal components analysis or feature selection, but sketching provides several
advantages \cite{Becchetti2019a}. Sketching via random projections is \emph{oblivious}, meaning
that the sketching operator can be constructed without seeing the data $\matr{X}$. This makes 
sketching particularly well-suited to streaming and distributed applications
 \cite{Tropp2019, Pourkamali2017, Jayram2013}. \emph{Cost-preserving} sketches
reduce the dimension of $\matr{X}$ in
such a way that certain cost functions (for example, Euclidean distance in the case of $k$-means clustering) 
are preserved within a low tolerance $\varepsilon$ with high probability \cite{Cohen2015}.  
More generally, sketches are often optimal in the sense that they achieve the
lower bound on some error \cite{Alon2017, Andoni2006, Jayram2013, Larsen2017, Becchetti2019a}. Finally,
sketches are fast to compute, typically $\mathcal{O}(N P \log P)$ for $N$ 
data points in $P$ dimensions \cite{Tropp2017, Ailon2009}. 

There is a broad literature on sketching,
establishing state-of-the-art bounds on the compressed dimension $Q$ in terms of  the number of data points $N$, the 
acceptable error $\varepsilon$, and the original dimension $P$, and optimizing
tradeoffs between error bounds, time, and space complexity \cite{Cohen2015, Becchetti2019a, Ailon2009}. 
There are a variety of ways to carry out the sketch in practice, including the original approach using
dense iteratively constructed Gaussian matrices, trinary matrices with sparsity $2/3$ \cite{Achlioptas2003}, and
several constructions sparser still \cite{Kane2014}. Here we use a method inspired by the 
\emph{Fast Johnson-Lindenstrauss Transform} \cite{Ailon2009} and described in detail in our previous work
\cite{Pourkamali2017}.

We will project $\mathbf{x}_i$ into a lower dimension by keeping $Q\ll P$ components chosen
uniformly at random. Before doing so we precondition the data using a random orthonormal system 
(ROS):
\begin{equation}
    \mathbf{x}_i = \matr{H}\matr{D} \mathbf{x}_i^{raw}
\end{equation}
where $\matr{D}$ is diagonal with entries $\pm 1$ chosen uniformly at random and $\matr{H}$ is 
a discrete cosine transform matrix\footnote{other choices include Hadamard or Fourier}.
The ROS transformation ensures that, with high probability, the magnitudes of the entries of 
$\mathbf{x}_i$ are relatively close to each other \cite{Do2012, Ailon2009},
minimizing the risk of ``missing'' the information in the vector when subsampling. 
The preconditioning operator $\matr{H} \matr{D}$ is the same for all $\vect{x}_i$,
and can be applied and inverted in $\mathcal{O}(NP \log P)$ time to the full dataset 
$\{ \vect{x}_1, \vect{x}_2, \ldots, \vect{x}_N\}$, which is the dominant
cost in our algorithm for small enough sketches. A detailed discussion of convergence properties
and bounds of the ROS can be found in \cite{Pourkamali2017}. Henceforth, when
we write $\mathbf{x}_i$ we assume the data have been preconditioned.

Following the preconditioning, we subsample $Q\ll P$ entries chosen uniformly at random from
$\mathbf{x}_i$.
This operation can be represented by the product $\matr{R}_i^\trans \mathbf{x}_i$
where $\matr{R}_i \in \mathbb{R}^{P\times Q}$ is sparse, with
$\matr{R}_i(p,q) = 1$ if we are keeping the $p$th feature of $\mathbf{x}_i$ and storing it
in the $q$th dimension of the sparsified vector, and 0 otherwise. Thus 
$\matr{R}_i^\trans \mathbf{x}_i \in \mathbb{R}^Q$ are the entries we preserve from $\mathbf{x}_i$.
In practice we store only the $Q$ entries of $\mathbf{x}_i$ that the subsampling picks out as well
as the indices specifying which entries were preserved 
(the indices of the $Q$ non-zero rows of $\matr{R}_i$), though it will facilitate our exposition to write
quantities like $\mathbf{R}_i \mathbf{R}_i^\trans \mathbf{x}_i \in \mathbb{R}^P$. 
Crucially, $\matr{R}_i$ is resampled for each $\mathbf{x}_i$.
This fact is what enables the method to be one-pass.

\subsection{Mixture Models}

We now describe the modeling framework, beginning with a general mixture model \cite{Murphy2012}.
Assume there are $K$ components and that each data point $\mathbf{x}_i$
belongs to one of them, indicated by the hidden variable $z_i \in \{1,2,\ldots K\}$. 
A \emph{mixture model} \cite{Murphy2012} is fully specified by
the component distributions $p_k(\mathbf{x}_i\mid\pars_k) = p(\mathbf{x}_i \mid z_i=k, \pars_k)$,
the component weights
$\bm{\pi}=\{\pi_k\}_{k=1}^K$ with $\sum \pi_k = 1$, and the 
parameters $\pars = \{\pars_k\}_{k=1}^K$. The distribution for $x_i$ is given by
\begin{equation}\label{eqn:mixture_prob}
p(\mathbf{x}_i \mid \pars_k) = \sum_{k=1}^K \pi_k p_k(\mathbf{x}_i \mid \pars_k).
\end{equation}

For a mixture of Gaussians, $\pars_k = \{\bm{\mu}_k, \matr{S}_k \}$
where $\bm{\mu}_k \in \mathbb{R}^P$ is the mean and $\matr{S}_k \in \mathbb{R}^{P\times P}$
is the covariance of the $k$th cluster, and 
$p(\matr{x}_i \mid z_i = k, \pars_k)$ is given by
\begin{equation}\label{eqn:glh_general}
    p_k(\mathbf{x}_i \mid \pars_k) = \frac{1}{(2 \pi)^{P/2}}\frac{1}{ |\matr{S}_k|^{1/2}  }
    \exp \left( -\frac{1}{2} D_{\pars_k}(\mathbf{x}_i) \right)
\end{equation}
where
\begin{equation}\label{eqn:mahalanobis}
    D_{\pars_k}(\mathbf{x}_i) = 
    \big( \mathbf{x}_i - \bm{\mu}_k \big)^\trans \bm{\Lambda}_k
    \big(\mathbf{x}_i -\bm{\mu}_k \big)
\end{equation}
is the squared Mahalanobis distance and $\bm{\Lambda}_k = \matr{S}_k^{-1}$ is the $k$th precision matrix.

The goal is to simultaneously estimate the parameters $\pars$, the weights 
$\bm{\pi}$, and the cluster assignments
$z_i$, which we do using the Expectation-Maximization algorithm.

\subsection{The EM Algorithm}

The log likelihood for data 
$\mathcal{X} = \{\mathbf{x}_1, \mathbf{x}_2, \ldots \mathbf{x}_N\}$
under the mixture distribution given in equation (\ref{eqn:mixture_prob})
is 
\begin{equation}
    \ell(\bm{\theta})   
    = \sum_{i=1}^N \log \left( \sum_{k=1}^K p_k(\vect{x}_i \mid \pars_k) \right).
\end{equation}
In the case of GMM's (as well as in many others) it is intractable to find the maximum
likelihood estimators (MLE's)
for $\bm{\theta}$ because of the hidden $\mathbf{z} = \{z_i\}_{i=1}^N$. 
Expectation-Maximization finds a local
optimum by iteratively holding one of the unknown quantities ($\pars$ or $\vect{z}$)
fixed and solving for the other. At each iteration we obtain a new estimate 
$\{\pars^t, \bm{\pi}^t\}$ computed from the previous estimate
$\{\pars^{t-1}, \bm{\pi}^{t-1}\} $.
Specifically, define the auxiliary function
\begin{equation}\label{eqn:auxiliaryfunction}
    Q(\pars, \pars^{t-1}) = E\left[ \ell_c(\pars) \mid \mathcal{X}, \pars^{t-1} \right]
\end{equation}
where
\begin{equation}
    \ell_c(\pars) = \sum_{i,k} \log p \left(\mathbf{x}_i, z_i=k \mid \pars_k \right)
\end{equation}
is the \emph{complete data log likelihood} and $\mathcal{X}$ is the dataset. 

The E step is then to compute the expected sufficient statistics in $Q$ for $\pars$, which
is equivalent to finding the \emph{responsibility} $r_{ik} = p(z_i=k\mid\mathbf{x}_i, \pars^{t-1})$
for each data point $\mathbf{x}_i$ and component $k$:
\begin{equation}\label{eqn:responsibility}
    r_{ik} = \frac{\pi_k p_k(\mathbf{x}_i \mid \pars_k^{t-1})}
                  {\sum_{j=1}^K \pi_j p_j\big(\mathbf{x}_i \mid \pars_j^{t-1}\big) }.
\end{equation}
The auxiliary function in equation (\ref{eqn:auxiliaryfunction}) can then be expressed in terms of 
the responsibility as
\begin{equation}\label{eqn:auxalt}
    Q(\pars, \pars^{t-1}) = \sum_{i,k} r_{ik}  
    \log \left[ \pi_k p_k(\mathbf{x}_i \mid \pars_k) \right].
\end{equation}

\begin{figure}
    \centering
    \includegraphics{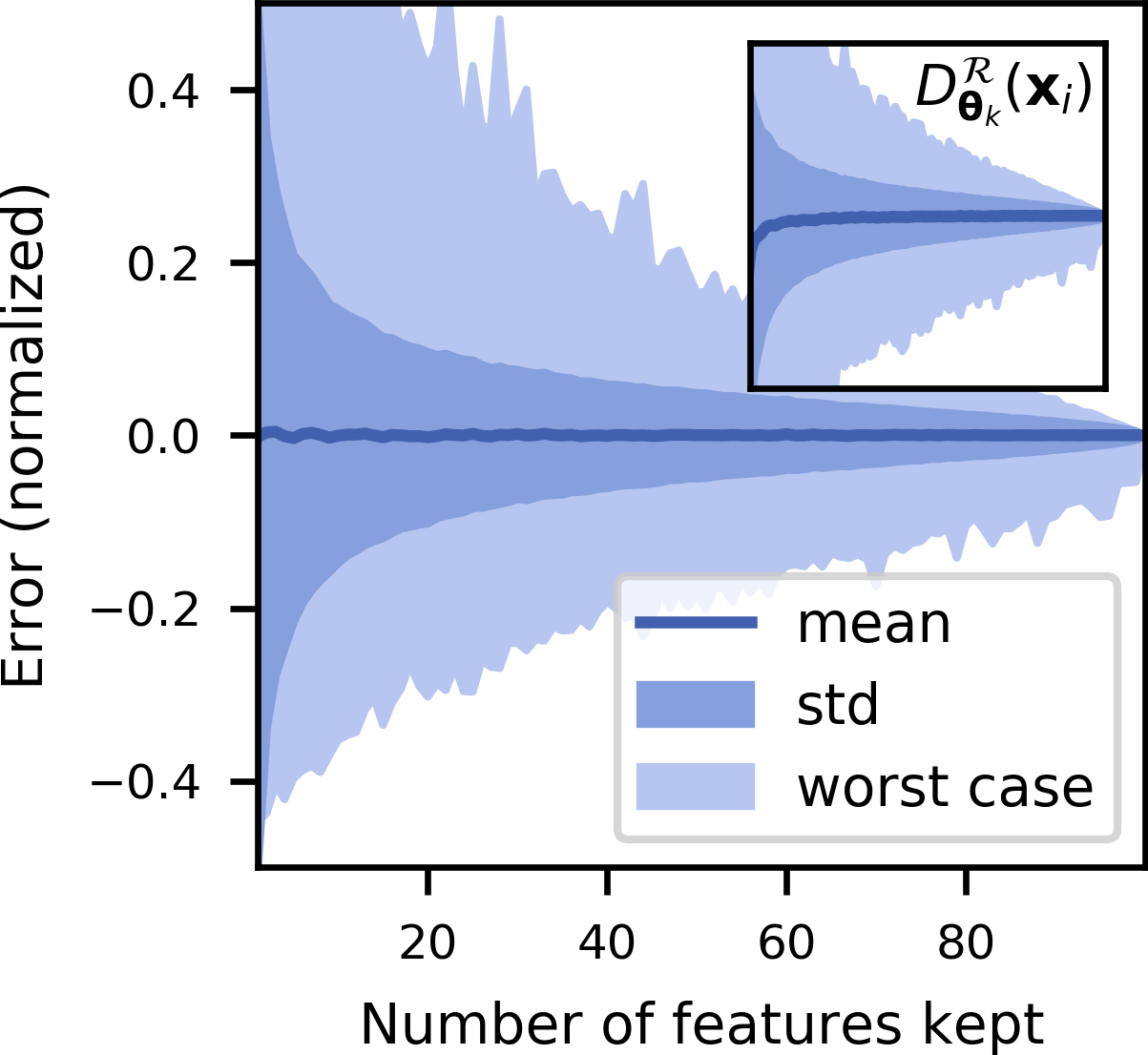}
    \caption{Error in $p_k^\mathcal{R}$ as a function of compression. 10000 
    $\mathbf{x}_i \sim \mathcal{N}(0,1)$ in 100 dimensions per trial. Inset:
    error in $\DR$.}
    \label{fig:probability_approximation}
\end{figure}

The M step is to obtain the next iterate $\{\pars^t, \bm{\pi}^t\}$ by optimizing $Q$:
\begin{equation}
    \{\bm{\theta}^t,\bm{\pi}^t\} = \text{argmax}_{\pars, \bm{\pi}} Q(\pars, \pars^{t-1}).
\end{equation}
For a mixture of Gaussians, $Q$ is optimized by the the maximum
likelihood estimators:
\begin{eqnarray}
    \widehat{\pi}_k &=& \frac{r_k}{N} \label{eqn:MLEpi}\\
    \widehat{\bm{\mu}}_k &=& \frac{\sum_i r_{ik}\mathbf{x}_i}{\sum_i r_{ik}}\label{eqn:MLEmu}  \\
    \widehat{\matr{S}}_k &=& \frac{\sum_{i} r_{ik} 
        (\mathbf{x}_i - \widehat{\bm{\mu}}_k)(\mathbf{x}_i 
            - \widehat{\bm{\mu}}_k)^\trans}{\sum_i r_{ik}}\label{eqn:MLEsigma}
\end{eqnarray}
The E and M steps are repeated until (guaranteed) convergence to a local maximum or saddle point of $Q$.

\subsection{EM for Sparsified Gaussian Mixtures}

We now present our main result, the EM algorithm for sparsified Gaussian mixtures;
i.e., the equivalents to the responsibility in equation (\ref{eqn:responsibility}) 
and the parameter MLE's in equations
(\ref{eqn:MLEpi}-\ref{eqn:MLEsigma}) under sparsification.

The sparsified analog of the squared Mahalanobis distance in equation (\ref{eqn:mahalanobis}) is
\begin{equation}\label{eqn:mahalanobis_sparsified}
    \DR = 
       \big( \mathbf{x}_i - \bm{\mu}_k \big)^\trans \SRik
        \big( \mathbf{x}_i  -\bm{\mu}_k \big)
\end{equation}
where 
\begin{equation}\label{eqn:Aik}
    \SRik = \Ri (\Ri^\trans \matr{S}_k \Ri)^{-1} \Ri^\trans \in \mathbb{R}^{P\times P}
\end{equation}
is the sparsified analog\footnote{We note that $\SRik$ is not equivalent to 
$\Ri  \Ri^\trans \bm{\Lambda}_k \Ri \Ri^\trans $;
i.e., the sparsified embedding of the precision matrix $\bm{\Lambda}_k$}
of the precision matrix $\bm{\Lambda}_k = \matr{S}_k^{-1}$.  
The sparsified Gaussian density is:
\begin{equation}\label{eqn:glh_sparsified}
    p_k^\mathcal{R}(\mathbf{x}_i \mid \pars_k) = \frac{1}{{2 \pi}^{Q/2} }
    \frac{1}{|\Ri^\trans \matr{S}_k \Ri|^{1/2}}
    \exp\left(-\frac{1}{2} D^\mathcal{R}_{\pars_k}(\mathbf{x}_i) \right).
\end{equation}
This can be taken to be a $Q$-dimensional
Gaussian with mean $\Ri\bm{\mu}_k$ and covariance 
$\Ri^\trans \matr{S}_k \Ri$ evaluated at $\Ri \mathbf{x}_i$.
Both $p^\mathcal{R}$ and $\DR$ are unbiased estimators of their dense counterparts when scaled by $P/Q$
(see Figure \ref{fig:probability_approximation}). 

The E-step is to compute the responsibility as given in equation (\ref{eqn:responsibility}).
Under sparsification, the responsbility becomes
\begin{equation}\label{eqn:responsibility_sparsified}
      \rRik =  \frac{\pi_k p_k^\mathcal{R}(\mathbf{x}_i \mid \pars_k^{t-1})}
              {\sum_{j=1}^K \pi_j p^\mathcal{R}_j\big(\mathbf{x}_i \mid \pars_k^{t-1}\big) }
\end{equation}
and hence the sparsified auxiliary function $Q$ in equation (\ref{eqn:auxalt}) is:
\begin{equation}
    Q^\mathcal{R}(\pars, \pars^{t-1}) 
    = \sum_{i,k} \rRik \log \left[ \pi_k  p_k^\mathcal{R}(\mathbf{x}_i \mid \pars_k) \right].
\end{equation}
We now derive the maximum likelihood estimators for $\pi_k$ and $\pars_k$ under sparsification.
\begin{theorem}[Maximum Likelihood Estimators for Sparsified Gaussian Mixtures]
    The maximum likelihood estimator for $\pi_k$ with respect to $Q^\mathcal{R}$ is
    \begin{equation}\label{eqn:MLEpi_sparsified}
        \widehat{{\pi}}_k^\mathcal{R} = \frac{\sum_i \rRik}{N}.
    \end{equation}
    The maximum likelihood estimators for $\bm{\mu}_k$ and $\matr{S}_k$ are the solutions to
    the system
    \begin{eqnarray}
        {\bm{\mu}}^\mathcal{R}_k &=& \left(\sum_i \rRik \SRik \right)^\dagger 
                          \sum_i \rRik \SRik \mathbf{x}_i \label{eqn:MLEmu_sparsified} \\
        \sum_i \rRik \SRik &=&
          \sum_i \rRik \SRik \matr{M}_{ik} \SRik \label{eqn:MLEsigma_sparsified}
    \end{eqnarray}
where 
\begin{equation}
    \matr{M}_{ik} = \big( \mathbf{x}_i - \bm{\mu}_k \big) \big( \mathbf{x}_i  -\bm{\mu}_k \big)^\trans.
\end{equation}
is the scatter matrix.
\end{theorem}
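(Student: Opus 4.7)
The plan is to maximize the sparsified auxiliary function $Q^\mathcal{R}(\pars,\pars^{t-1}) = \sum_{i,k} \rRik \log[\pi_k p_k^\mathcal{R}(\mathbf{x}_i\mid\pars_k)]$ by setting $\partial Q^\mathcal{R}/\partial \pi_k$, $\nabla_{\bm{\mu}_k} Q^\mathcal{R}$, and $\partial Q^\mathcal{R}/\partial \matr{S}_k$ to zero, mirroring the classical GMM derivation but with care taken for the random projection $\Ri$ that sits inside $\SRik$. After substituting (\ref{eqn:glh_sparsified}), the objective splits into a $\log \pi_k$ piece, a $-\tfrac{1}{2}\log|\Ri^\trans \matr{S}_k \Ri|$ piece, and the Mahalanobis quadratic $\DR$, and each parameter only couples to a subset of these terms.

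For $\pi_k$, only the $\log \pi_k$ summand contributes; introducing a Lagrange multiplier for the constraint $\sum_k \pi_k = 1$ and using $\sum_k \rRik = 1$ at every $i$ gives (\ref{eqn:MLEpi_sparsified}) exactly as in the dense case. For $\bm{\mu}_k$, only $\DR$ is relevant, and symmetry of $\SRik$ yields $\nabla_{\bm{\mu}_k} Q^\mathcal{R} = \sum_i \rRik \SRik(\mathbf{x}_i-\bm{\mu}_k)$. Setting this to zero produces the linear system $(\sum_i \rRik \SRik)\bm{\mu}_k = \sum_i \rRik \SRik \mathbf{x}_i$; because each $\SRik$ has rank only $Q$, the coefficient matrix can be rank-deficient, which is why (\ref{eqn:MLEmu_sparsified}) uses the Moore--Penrose pseudoinverse rather than an ordinary inverse.

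The $\matr{S}_k$ derivation is the main obstacle because $\matr{S}_k$ enters both the log-determinant and the quadratic form, and only through the sandwiched product $\Ri^\trans \matr{S}_k \Ri$. I would differentiate by combining $\partial \log|B|/\partial B = B^{-\trans}$ with the chain rule through $B = \Ri^\trans \matr{S}_k \Ri$ to obtain $\partial \log|\Ri^\trans \matr{S}_k \Ri|/\partial \matr{S}_k = \Ri(\Ri^\trans \matr{S}_k \Ri)^{-1}\Ri^\trans = \SRik$, then rewrite the quadratic as $\DR = \mathrm{tr}((\Ri^\trans \matr{S}_k \Ri)^{-1} \Ri^\trans \matr{M}_{ik} \Ri)$ and apply $\partial A^{-1} = -A^{-1}(\partial A)A^{-1}$ to get $\partial \DR/\partial \matr{S}_k = -\SRik \matr{M}_{ik} \SRik$. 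Weighting by $\rRik$, summing over $i$, and setting the result to zero collapses directly to (\ref{eqn:MLEsigma_sparsified}). The key conceptual point I would flag is that, unlike the classical estimator (\ref{eqn:MLEsigma}), both (\ref{eqn:MLEmu_sparsified}) and (\ref{eqn:MLEsigma_sparsified}) are implicit: $\SRik$ on the right depends on the unknown $\matr{S}_k$ and $\matr{M}_{ik}$ depends on the unknown $\bm{\mu}_k$, so the theorem really characterizes a coupled fixed-point system rather than a closed form, which in the diagonal or spherical covariance regime actually implemented by the algorithm simplifies to a tractable coordinatewise update.
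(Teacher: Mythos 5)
Your proposal is correct and follows essentially the same route as the paper: isolate the $\log\pi_k$, $\log|\Ri^\trans\matr{S}_k\Ri|$, and $\DR$ terms of $Q^\mathcal{R}$, differentiate each, and set the stationarity conditions. Your use of the chain rule through $B=\Ri^\trans\matr{S}_k\Ri$ and the identity $\partial A^{-1}=-A^{-1}(\partial A)A^{-1}$ is just a cleaner packaging of the paper's ``Jacobi's formula'' and ``element-wise differentiation'' steps, and your remarks on the pseudoinverse and the coupled fixed-point structure match the discussion the paper gives immediately after the theorem.
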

\begin{proof}
The component of $Q^\mathcal{R}$ with $\pi_k$-dependence is 
$$
    \ell^\mathcal{R}(\pi_k) =  \sum_{i,k} \rRik \log \pi_k
$$ from which the MLE in equation (\ref{eqn:MLEpi_sparsified}) can be derived by setting
$\partial \ell^\mathcal{R} / \partial \pi_k = 0$ for each $k$ simultanously 
and solving the resulting system.
The components of $Q^\mathcal{R}$ with $\bm{\mu}_k$ and $\matr{S}_k$ dependence are
\begin{equation}\label{eqn:ellmusigma}
    \ell^\mathcal{R}(\bm{\mu}_k, \matr{S}_k) = \sum_i \rRik \left( 
    \log |\Ri^T \matr{S}_k \Ri | + \DR 
    \right).
\end{equation}
To find $\partial \ell^\mathcal{R}/\partial \bm{\mu}_k$ we 
observe\footnote{via the chain rule and the fact that
    $\frac{\partial}{\partial \vect{a}} \big(\vect{a}^\trans \matr{A} \vect{a}) 
= \big(\matr{A}^\trans + \matr{A} \big) \vect{a}$} that
$$
    \frac{\partial}{\partial \bm{\mu}_k} \DR = -2 \SRik \big(\mathbf{x}_i - \bm{\mu}_k \big).
$$
Equation (\ref{eqn:MLEmu_sparsified}) then follows by setting 
$\partial \ell^\mathcal{R} / \partial \bm{\mu}_k = 0$ and rearranging.

We now find $\partial \ell^\mathcal{R}/\partial \matr{S}_k$. For the first term in the 
summand of equation
(\ref{eqn:ellmusigma}), we have that
\begin{equation}\label{eqn:partial_term_1}
    \frac{\partial}{\partial \matr{S}_k} \log |\Ri^\trans \matr{S}_k \Ri| = 
    \SRik
\end{equation}
which can be obtained element-wise using Jacobi's formula\footnote{Jacobi's formula states that
    $\frac{d}{dt} \det \matr{A} = \text{tr}\left[\text{adj}(\matr{A}) \frac{d\matr{A}}{dt} \right]$}
and the symmetry of $\matr{S}_k$.
For the second term, we apply the ``trace trick'':
\begin{equation}
    D_{\pars_k}^\mathcal{R} (\mathbf{x}_i) =
         \text{tr} \left[ \matr{M}_{ik} \SRik \right] \label{eqn:Dtracetrick_sparsified}
\end{equation} 
to find
\begin{equation}\label{eqn:partial_term_2}
    \frac{\partial}{\partial \matr{S}_k} D_{\pars_k}^\mathcal{R} (\mathbf{x}_i) = 
    - \SRik \matr{M}_{ik} \SRik,
\end{equation}
which can be obtained by direct element-wise differentiation of equation (\ref{eqn:Dtracetrick_sparsified}).
Setting $\partial \ell/\partial \matr{S}_k = 0$ from equation (\ref{eqn:ellmusigma}) using equations 
(\ref{eqn:partial_term_1}) and (\ref{eqn:partial_term_2}) 
we obtain equation (\ref{eqn:MLEsigma_sparsified}).
\end{proof}

\begin{figure}
    \centering
    \includegraphics{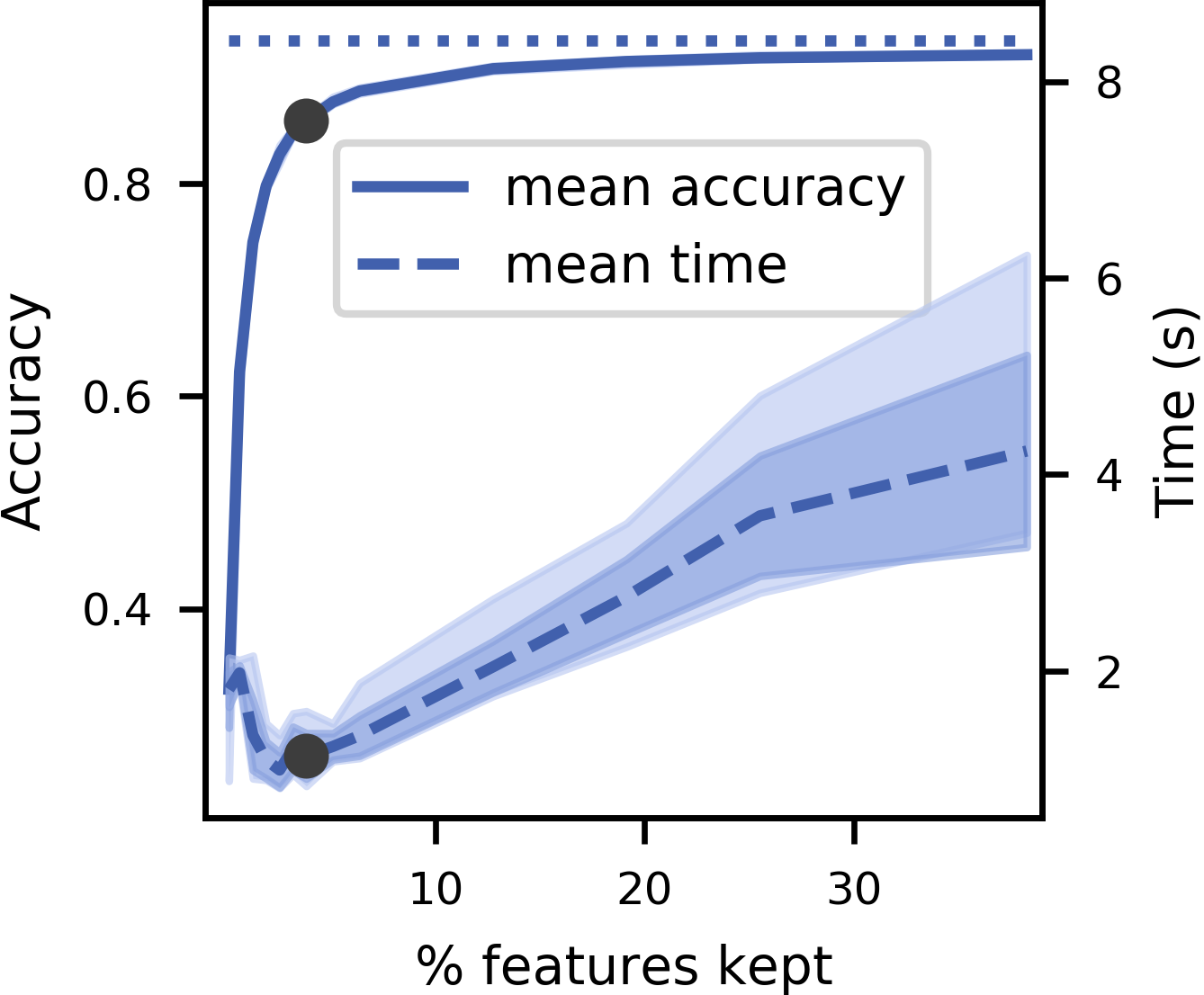} 
    \caption{Accuracy and timing of diagonal SGMM on the subset \{0,3,9\} of MNIST ($N = 18003$)
    as a function of compression.
    Three initializations per trial, 20 trials per compression. Shaded regions indicate standard deviation
    (dark) and extrema (light) taken over the trials.}
    \label{fig:mnist_accuracy}
\end{figure}

Evaluating these MLEs does not require access to the full 
$\mathbf{x}_i$, as in each case such terms are sparsified by the action of $\SRik$.
In the case of no sparsification; i.e., 
$\Ri = \matr{I}$ for all $i$, we recover the standard MLEs in equations 
(\ref{eqn:MLEpi} - \ref{eqn:MLEsigma}). Equation (\ref{eqn:MLEpi_sparsified}) has only $\pi_k^\mathcal{R}$
dependence, and hence gives the MLE for this parameter. Equation (\ref{eqn:MLEmu_sparsified}) gives 
the MLE for $\bm{\mu}_k^\mathcal{R}$ in terms of the $\SRik$. In the standard case, the $\bm{\Lambda}_k$
terms cancel and we obtain the MLE for $\bm{\mu}_k$, which is then used in place of $\bm{\mu}_k$
to find the MLE for $\matr{S}_k$; however, in the sparsified case we do not observe
this cancelation, and hence must solve equations (\ref{eqn:MLEmu_sparsified}) 
and (\ref{eqn:MLEsigma_sparsified}) simultaneously. This can be done, for example, in an EM-type 
iterative fashion, but such a procedure further requires the evaluation of $\SRik$, involving a
$Q\times Q$ inverse, of which there are $KN$ per iteration. 
These issues can be circumvented by using diagonal or spherical covariances. We give the MLEs
for the diagonal case,
$\matr{S}_k = \text{diag}(\mathbf{s}_k)$ where $\mathbf{s}_k \in \mathbb{R}^P$ (proof omitted). 
\begin{corollary}[MLEs for diagonal $\matr{S}_k$]
    When the $\matr{S}_k$ are diagonal, the
    system of equations (\ref{eqn:MLEmu_sparsified} - \ref{eqn:MLEsigma_sparsified}) yields the MLEs
\begin{eqnarray}
    \widehat{\bm{\mu}}_k &=& 
        \left(\sum_i \rRik \matr{P}_i \right)^\dagger 
        \sum_i \rRik \matr{P}_i \mathbf{x}_i \label{eqn:MLEmu_sparsified_diag}\\
    \widehat{\matr{S}}_k &=&
    \text{diag} \left[\left(\sum_i \rRik \matr{P}_i \right)^\dagger 
    \sum_i \rRik \matr{P}_i \matr{M}_{ik} \matr{P}_i \right] \label{eqn:MLEsigma_sparsified_diag}
\end{eqnarray}
    where $\matr{P}_i \in \mathbb{R}^{P\times P}$ is the sparse projection matrix:
\begin{equation}
    \matr{P}_i = \matr{R}_i \matr{R}_i^\trans.
\end{equation}
\end{corollary}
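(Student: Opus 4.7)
The plan is to first reduce the sparsified precision $\SRik$ to an especially simple form under the diagonal assumption, then feed this form into the two MLE equations of the theorem, handling the mean equation by pure algebra and the covariance equation by re-deriving it coordinate-wise from the log-likelihood.

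The key structural observation is that when $\matr{S}_k = \text{diag}(\mathbf{s}_k)$, the $Q\times Q$ matrix $\Ri^\trans \matr{S}_k \Ri$ is itself diagonal, consisting of the $Q$ entries of $\mathbf{s}_k$ picked out by the subsampling. Its inverse is therefore the diagonal matrix of reciprocals, and plugging into the definition (\ref{eqn:Aik}) gives the identity $\SRik = \matr{P}_i \bm{\Lambda}_k = \bm{\Lambda}_k \matr{P}_i$, with the two factors commuting because both are diagonal. I would verify this identity explicitly at the outset, since every subsequent simplification relies on it.

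For the mean, I would substitute $\SRik = \matr{P}_i \bm{\Lambda}_k$ into equation (\ref{eqn:MLEmu_sparsified}). Because all matrices in sight are diagonal, $\bm{\Lambda}_k$ factors out of both $\sum_i \rRik \matr{P}_i \bm{\Lambda}_k$ and $\sum_i \rRik \matr{P}_i \bm{\Lambda}_k \mathbf{x}_i$ to the left. For the pseudoinverse, the identity $(A\bm{\Lambda}_k)^\dagger = \bm{\Lambda}_k^{-1} A^\dagger$ holds when $A = \sum_i \rRik \matr{P}_i$ because both $A$ and $\bm{\Lambda}_k$ are diagonal with non-negative entries, so the Moore-Penrose formula reduces to entry-wise reciprocation on the common support. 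The two copies of $\bm{\Lambda}_k$ then cancel, leaving (\ref{eqn:MLEmu_sparsified_diag}).

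For the covariance, manipulating the matrix equation (\ref{eqn:MLEsigma_sparsified}) directly is awkward because $\matr{P}_i \matr{M}_{ik} \matr{P}_i$ is not diagonal; instead I would return to the log-likelihood (\ref{eqn:ellmusigma}) and exploit the fact that under the diagonal parametrization it decouples across coordinates. Writing $P_{i,p} \in \{0,1\}$ for the diagonal entries of $\matr{P}_i$, the log-determinant term becomes $\sum_p P_{i,p} \log s_{k,p}$ and $\DR$ becomes $\sum_p P_{i,p} (x_{i,p} - \mu_{k,p})^2 / s_{k,p}$, so $\partial \ell^\mathcal{R}/\partial s_{k,p} = 0$ is a scalar equation yielding $s_{k,p} = \sum_i \rRik P_{i,p}(x_{i,p}-\mu_{k,p})^2 / \sum_i \rRik P_{i,p}$. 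Recognizing the numerator as the $(p,p)$-entry of $\sum_i \rRik \matr{P}_i \matr{M}_{ik} \matr{P}_i$ (using $P_{i,p}^2 = P_{i,p}$) and the denominator reciprocal as the $(p,p)$-entry of $(\sum_i \rRik \matr{P}_i)^\dagger$ assembles the scalar MLEs into the matrix form (\ref{eqn:MLEsigma_sparsified_diag}).

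The main subtlety, and likely the only nontrivial obstacle, will be justifying the use of the pseudoinverse in place of an inverse: if some coordinate $p$ is never sampled then $\sum_i \rRik P_{i,p} = 0$, the corresponding $\widehat{\mu}_{k,p}$ and $\widehat{s}_{k,p}$ are not identifiable, and the pseudoinverse convention simply assigns them zero. This must be stated, and it is also what makes the factorization of pseudoinverses in the mean argument valid, since we are really working on the common support of the diagonal factors.
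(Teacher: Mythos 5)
The paper omits the proof of this corollary, so there is nothing to compare against; judged on its own, your argument is correct and is the natural way to fill the gap. The identity $\SRik = \matr{P}_i\bm{\Lambda}_k = \bm{\Lambda}_k\matr{P}_i$ for diagonal $\matr{S}_k$ is right (a selection matrix $\Ri$ makes $\Ri^\trans\matr{S}_k\Ri$ the diagonal submatrix of kept entries, and conjugating its inverse by $\Ri$ scatters the reciprocals back to a diagonal $P\times P$ matrix supported on $\matr{P}_i$), the cancellation of $\bm{\Lambda}_k$ through the pseudoinverse in the mean equation is valid for commuting diagonal factors, and your decision to re-derive the covariance update coordinate-wise from (\ref{eqn:ellmusigma}) rather than manipulate (\ref{eqn:MLEsigma_sparsified}) directly is actually the more careful route: substituting $\SRik=\matr{P}_i\bm{\Lambda}_k$ into (\ref{eqn:MLEsigma_sparsified}) gives $\sum_i\rRik\matr{P}_i\bm{\Lambda}_k = \bm{\Lambda}_k\bigl(\sum_i\rRik\matr{P}_i\matr{M}_{ik}\matr{P}_i\bigr)\bm{\Lambda}_k$, whose left side is diagonal but whose right side is not, so the matrix equation can only be enforced on its diagonal when $\matr{S}_k$ is constrained to be diagonal. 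Your coordinate-wise stationarity argument makes this restriction explicit and lands on exactly the $(p,p)$ entries that $\text{diag}[\cdot]$ in (\ref{eqn:MLEsigma_sparsified_diag}) extracts. Your handling of the pseudoinverse for never-sampled coordinates is also the right reading of the $\dagger$ in the statement. Two trivial nits: invertibility of the diagonal $\bm{\Lambda}_k$, not nonnegativity of $A$, is what justifies $(A\bm{\Lambda}_k)^\dagger=\bm{\Lambda}_k^{-1}A^\dagger$; and you implicitly use $\rRik\ge 0$ so that $a_p=\sum_i\rRik P_{i,p}=0$ exactly characterizes the unidentifiable coordinates, which is worth one sentence.
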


In the case of diagonal covariances
(as well as in the simpler spherical case in which $\matr{S}_k = s_k I$), 
the responsibilities $\rRik$ (E step) and the updates for 
$\widehat{\bm{\mu}}_k$ and $\widehat{\matr{S}}_k$ (M step) can each be computed in 
$\mathcal{O}(KNQ)$ time. Thus the 
EM algoritm has time complexity $\mathcal{O}(KNQ)$ per iteration, in contrast to the standard diagonal 
GMM's complexity of $\mathcal{O}(KNP)$ per iteration.

\section{Algorithm and Implementation}\label{sec:algorithm}

We now discuss the practical implementation of the SGMM 
algorithm\footnote{Code (Python/C) for SGMM and sparsified $k$-means is available at https://github.com/erickightley/sparseklearn}. 
Thus far we have used notation expedient for
mathematical exposition but not representative of how we perform computations in practice. For example,
we do not perform the matrix multiplications in quantities $\matr{R}_i \matr{H} \matr{D} \vect{x}_i$, nor do we form most of 
the intermediate quantities in our calculations. 

\begin{algorithm}
   \caption{Preconditioned Random Projection}
   \begin{algorithmic}[1]
      \State Given: Data $\{\vect{x}_1, \vect{x}_2, \ldots, \vect{x}_N\}$ with $\vect{x}_i \in \mathbb{R}^P$, 
                            sketch size $Q \ll P$
      \State Generate preconditioning operator $\matr{H} \matr{D}$;
      \For{$i = 1 \to N$}
          \State Generate projection $\matr{R}_i^\trans$;
          \State Precondition and project $\matr{y}_i \leftarrow \matr{R}_i^\trans \matr{H} \matr{D} \vect{x}_i$;
          \State Discard $\vect{x}_i$;
      \EndFor
      \State Return $\{\vect{y}_1, \ldots, \vect{y}_N\}, \{\matr{R}_i, \ldots, \matr{R}_N\}, \matr{H}, \matr{D}$;
\end{algorithmic}
\label{alg:precond}
\end{algorithm}

Algorithm \ref{alg:precond} shows the preconditioning and projection as a stand-alone algorithm. 
We need to generate and apply the preconditioning and subsampling operation 
$\matr{R}_i^\trans \matr{H} \matr{D}$
to each data point. Recall that $\matr{D} \in \mathbb{R}^{P \times P}$ is diagonal with
 entries $\pm1$ chosen uniformly at random; it therefore suffices to store just the $\mathcal{O}(P)$ indices corresponding
 to the $-1$ entries, and  computing $\matr{D} \vect{x}_i$ is a matter of flipping the sign on the relevant entries of 
 $\vect{x}_i$. Next, applying $\matr{H}$ requires no additional storage and has time complexity $\mathcal{O}(P\log P)$
 as a discrete cosine transform. Finally, the sparse projection matrix $\matr{R}_i \in \mathbb{R}^{P\times Q}$ is 
fully specified by a list of $Q$ integers indicating which of the original 
$P$ features we will keep, and the action of $\matr{R}_i$ is simply to keep the $Q$ entries of the input.

We can thus compute $\matr{R}_i^\trans \matr{H} \matr{D} \vect{x}_i$ for a single $\vect{x}_i$ 
in $\mathcal{O}(P \log P)$ time,
and need to store one list of integers of length $\mathcal{O}(P)$ for $\matr{D}$ (global across all $\vect{x}_i$)
and one list of integers of length $Q$ corresponding to the indices preserved by 
$\matr{R}_i$, in addition to the output $\vect{y}_i$, a list of floating-point numbers of length $Q$. 
Hence algorithm \ref{alg:precond} has time complexity $\mathcal{O}(N P\log P)$ and
space complexity $\mathcal{O}(NQ)$. 

Algorithm \ref{alg:sgmm} implements the Gaussian Mixture model clustering on the sparsified data. In practice we do so
for diagonal or spherical covariances.
It takes as input the sketch of the data $\{\vect{y}_1, \vect{y}_2, \ldots, \vect{y}_N\}$ with $\vect{y}_i \in \mathbb{R}^Q$
(the output of algorithm \ref{alg:precond}), the projections $\{\matr{R}_1, \matr{R}_2, \ldots, \matr{R}_N\}$,
which are stored as a total of $NQ$ integers,  the preconditioning operator $\matr{H}\matr{D}$, stored as
the $\mathcal{O}(P)$ integers representing $\matr{D}$, and the number of components $K$. It returns the
estimates of the $K$ cluster means $\widehat{\bm{\mu}}_k \in \mathbb{R}^P$, the $K$ covariances
$\widehat{\matr{S}}_k$, which are $P \times P$ matrices (if we choose to use dense covariances),
$P$-vectors (diagonal covariances), or scalars (scalar covariances), the $K$ cluster weights
 $\widehat{\pi}_k \in \mathbb{R}^K$, and the $N\times K$ responsibilities $\rRik$.

\begin{algorithm}
   \caption{Sparsified Gaussian Mixture Model (SGMM)}
   \begin{algorithmic}[1]
      \State Given: Preconditioned, subsampled 
                data $\{\vect{y}_1, \vect{y}_2, \ldots, \vect{y}_N\}$ with $\vect{y}_i \in \mathbb{R}^Q$, 
                preconditioning operator $\matr{H} \matr{D}$, projections $\{\matr{R}_1, \matr{R}_2, \ldots, \matr{R}_N\}$,
                number of components $K$
      \State Initialize $\widehat{\bm{\mu}}_k$
      \State Bootstrap $\rRik$
      \State Initialize $\widehat{\matr{S}}_k$, $\widehat{\pi}_k$
      \While{not converged}
             \State E-step: update $\rRik$ for all $i,k$
             \State M-step:  update $\widehat{\pi}_k$, $\widehat{\bm{\mu}}_k$, and  $\widehat{\matr{S}}_k$ for all $k$
      \EndWhile
      \State Invert preconditioning $\matr{H} \matr{D}$ on 
      $\widehat{\bm{\mu}}_k$ and $\widehat{\matr{S}}_k$
      \State Return $\widehat{\bm{\mu}}_k$, $\widehat{\matr{S}}_k$, $\widehat{\pi}_k$ and $\rRik$
\end{algorithmic}
\label{alg:sgmm}
\end{algorithm}

In practice we initialize the means $\widehat{\bm{\mu}}_k$ using the well-known $k$-means++ algorithm
\cite{Arthur2007}, which iteratively samples the input points with probability proportional to the squared distance
of each point to the current set of initial means. Once we have selected
a data point $\vect{y}_i$ to use as an initial mean, we project it back to $P$ dimensions; alternatively, if we have access
to the original data, we may use $\matr{H}\matr{D} \vect{x}_i$. We then bootstrap the responsibilities using hard assignment
as in $k$-means: $\rRik = \delta_{kj}$ where 
\begin{equation}
    j=\text{argmin}_{k'} \norm{\vect{y}_i - \matr{R}_{i}^\trans  \widehat{\bm{\mu}}_{k'}}_2.
\end{equation}
Using the bootstrapped $\rRik$ and the initialized $\widehat{\bm{\mu}}_k$ we initialize $\widehat{\pi}_k$ using 
(\ref{eqn:MLEpi_sparsified}) and $\widehat{\matr{S}}_k$ using (\ref{eqn:MLEsigma_sparsified_diag}).
In the E-step we update $\rRik$ with (\ref{eqn:responsibility_sparsified}), and in the M-step we
update $\widehat{\pi}_k$, $\widehat{\bm{\mu}}_k$, and  $\widehat{\matr{S}}_k$ using (\ref{eqn:MLEpi_sparsified}),
(\ref{eqn:MLEmu_sparsified_diag}), and (\ref{eqn:MLEsigma_sparsified_diag}), respectively. Finally, if desired
we invert the preconditioning by sign-flipping features according to $\matr{D}$ and then applying 
$\matr{H}^{-1}$. The E and M-steps each have time complexity $\mathcal{O}(NKQ)$, and the 
application of the inversion $\matr{H}^{-1} = \matr{H}^\trans$
has complexity $\mathcal{O}(N P \log P)$. 

\section{Simulations}\label{sec:simulation}

\subsection{Accuracy and Timing}

Figure \ref{fig:mnist_accuracy} shows the accuracy of the SGMM classifier on the subset
$\{0,3,9\}$ of the MNIST dataset as a function of the percentage of features preserved.
SGMM recovers close to full GMM accuracy with only a small number of features in a fraction
of the time. For instance, at the gray dot, SGMM with $3.82\%$ of the features preserved 
(30 out of 784) achieves a mean accuracy of $0.86$ ($92\%$ of the accuracy with all features)
in $12.9\%$ of the computation time. We further note that there is almost no variance in 
the accuracy over multiple trials, a consequence of preconditioning that we also 
observed in our sparsified $k$-means classifier \cite{Pourkamali2017}.

\medskip

\subsection{Small Cluster Recovery}

\begin{figure}
    \centering
    \includegraphics{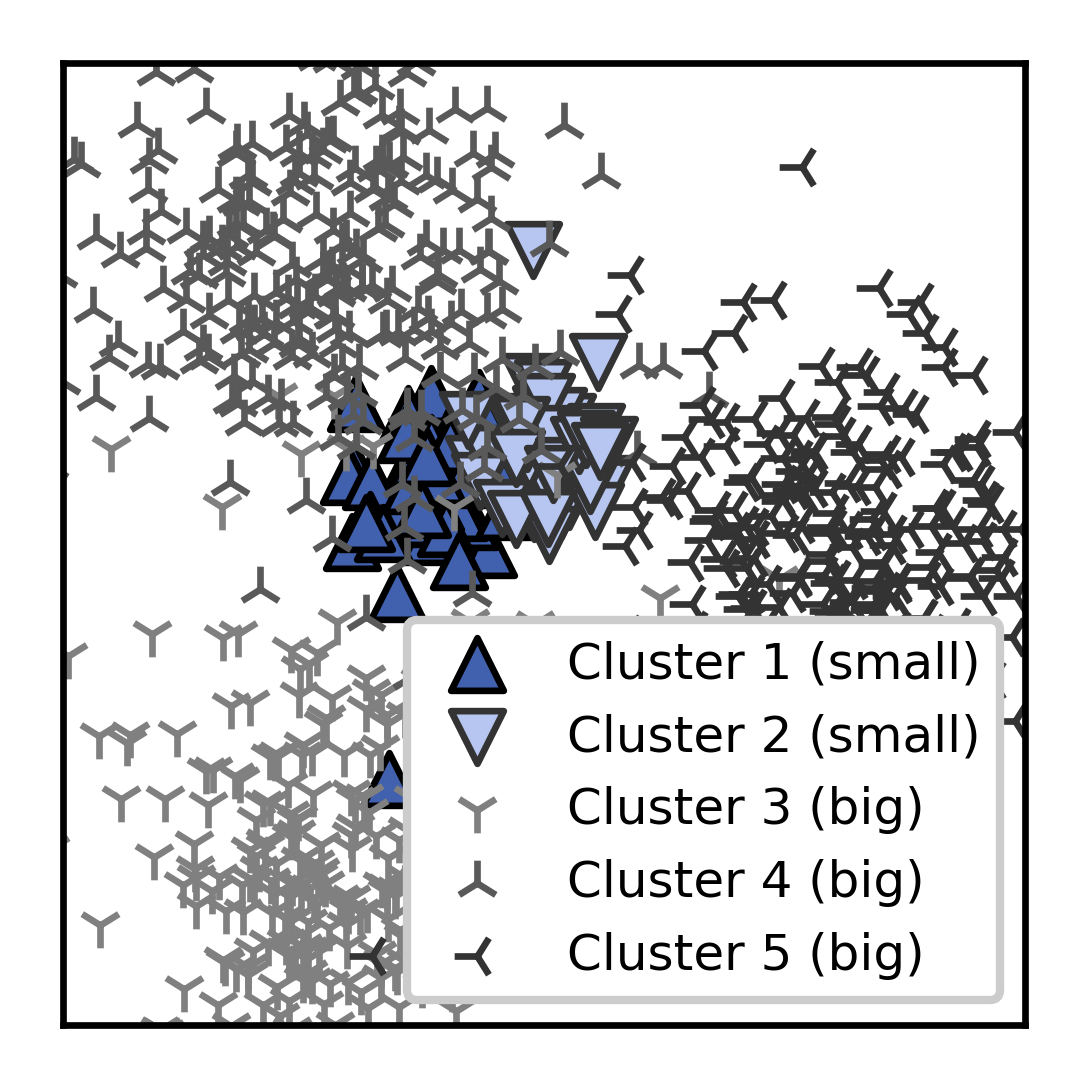}
    \caption{Small cluster recovery using spherical SGMM.}
    \label{fig:SCRG}
\end{figure}
In a regime where clusters have very different sizes, 
both in the sense of variance and number of points,
GMM (even with spherical covariance) can significantly outperform $k$-means.  
Figure \ref{fig:SCRG} shows an example in which
where SGMM correctly identifies two small clusters from three large ones 
with $98.5\%$ accuracy while $k$-means was unable to recover the clusters ($14.5\%$)
accuracy.
Data drawn from five $20$-dimensional Gaussians, embedded into $100$-dimensional space, with  
$\norm{\matr{S}_{big}}_2 \sim 10 \norm{\matr{S}_{small}}_2$ and $N_{big} = 5\times N_{small} = 250$.

\subsection{One-Pass Recovery of Means}

The fact that we resample the subsampling matrices $\matr{R}_i$ for each data point $\vect{x}_i$
is what permits our algorithm to be one-pass. Suppose we wish to approximate a quantity like 
$\norm{\vect{x}_i - \vect{x}_j} $ in the sparsified regime. The framework we use here would do so by
using only the overlapping entries preserved by both $\matr{R}_i$ and $\matr{R}_j$. In the event that 
there is no overlap, we cannot estimate such a quantity\footnote{The only place in our algorithm where
such an operation may arise is during the initialization of the SGMM algorithm using $k$-means++.}. 
We may introduce a parameter
$Q_S \in \{0,1,\ldots,Q\}$ indicating how many of the $Q$ features must overlap between all data points.
To do so in practice we sample $Q_S$ indices once during the sparsification in algorithm \ref{alg:precond}, and then 
sample $Q-Q_S$ for each $\matr{R}_i$ subsampling matrix. 

\begin{figure}
    \centering
    \includegraphics{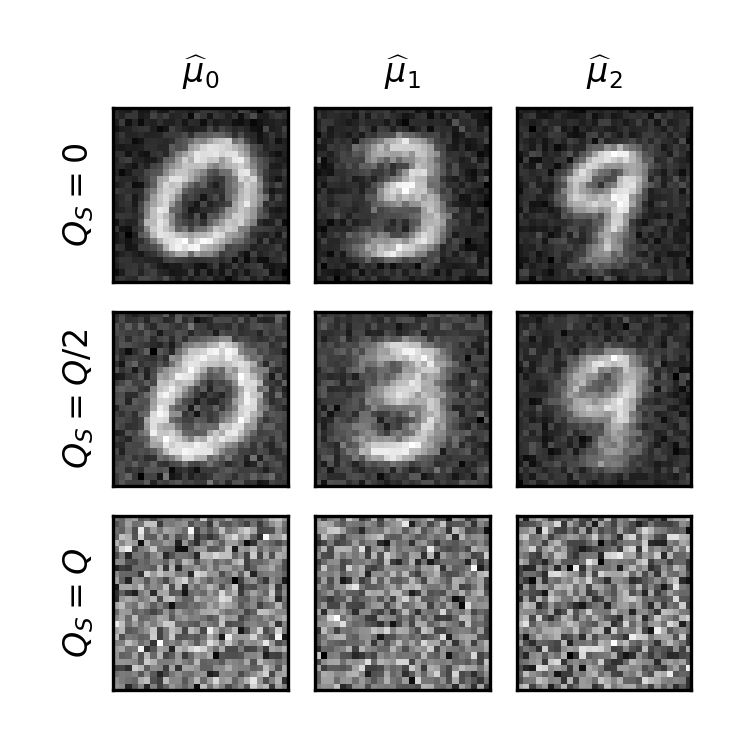} 
    \caption{Dependence of one-pass mean estimates on the number of shared features $Q_S$ in the sparsification.
                  SGMM run on the subset $\{0,3,9\}$ of MNIST with $Q=10$; spherical covariances.}
    \label{fig:onepass_Q_intro}
\end{figure}

Figure \ref{fig:onepass_Q_intro} shows the means obtained
from the SGMM algorithm applied to the MMIST subset of $\{0,3,9\}$ using three different values of $Q_S$. 
In the first row, $Q=0$, meaning that we do not enforce any overlap (though it may occur by chance), as in
algorithm (\ref{alg:sgmm}). In the second row we have set $Q_S = Q/2$, so that half of the features are shared 
between all data points. The means are noisier, because a  pixel is now half as likely to 
be preserved under sparsification given that it is not one of the $Q_S$ features saved for all $\vect{x}_i$. Finally, in
the third row we show the means when we set $Q_S=Q$; i.e., the classical random projection regime. 
The cluster assignments are accurate, but the dense means are meaningless in this regime without a second
pass through the data (see Section \ref{subsec:sketch} for further discussion). 

\section{Conclusions}\label{sec:conclusion}

The sparsified Gaussian mixture model is an efficient clustering algorithm that reduces
the storage and computational cost of Gaussian mixtures while still being one-pass.
After paying an upfront cost of 
$\mathcal{O}(N P \log P)$ to precondition the data,
SGMM compresses $N$ samples from $P$ to $Q$ dimensions, and with diagonal or spherical
covariances, fits $K$ clusters 
in $\mathcal{O}(KNQ)$ time per EM iteration.

\bibliographystyle{IEEEtran}
\bibliography{IEEEabrv,library}

\end{document}